\newtheorem{theorem}{Theorem}
\newtheorem{lemma}[theorem]{Lemma}
\title{Effective Integration of Weighted Cost-to-go and Conflict Heuristic within Suboptimal CBS}
\author{
    Rishi Veerapaneni, Tushar Kusnur, Maxim Likhachev
}
\begin{document}

\maketitle

\begin{abstract}
Conflict-Based Search (CBS) is a popular multi-agent path finding (MAPF) solver that employs a low-level single agent planner and a high-level constraint tree to resolve conflicts. 
The vast majority of modern MAPF solvers focus on improving CBS by reducing the size of this tree through various strategies with few methods modifying the low level planner.
Typically low level planners in existing CBS methods use an \textit{unweighted} cost-to-go heuristic, with suboptimal CBS methods also using a conflict heuristic to help the high level search.
In this paper, we show that, contrary to prevailing CBS beliefs, a  \textit{weighted} cost-to-go heuristic can be used effectively alongside the conflict heuristic in two possible variants.
In particular, one of these variants can obtain large speedups, 2-100x, across several scenarios and suboptimal CBS methods. Importantly, we discover that performance is related not to the weighted cost-to-go heuristic but rather to the \textit{relative conflict heuristic weight}'s ability to effectively balance low-level and high-level work.
Additionally, to the best of our knowledge, we show the first theoretical relation of prioritized planning and bounded suboptimal CBS and demonstrate that our methods are their natural generalization.
\textbf{Update March 2024: We found that the relative speedup decreases to around 1.2-10x depending on how the conflict heuristic is computed (see appendix for more details).}

\end{abstract}

\section*{Introduction}\label{sec:intro}

\begin{figure}[t]
    \centering
    \includegraphics[width=2.3in]{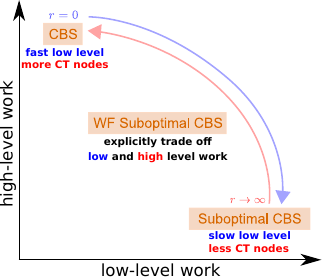}
    \caption{\footnotesize CBS's original low-level planner finds shortest paths ignoring additional conflicts created, resulting in a large amount of CT nodes for the high-level search to resolve the conflicts. Suboptimal CBS (ECBS, EECBS) utilize a low-level focal search that minimizes future conflicts alongside path length, resulting in a slower low level planner but substantially reducing the high level work and boosting performance. Our Weighted Focal method improves performance by finding a sweet spot in the middle. We introduce a hyper-parameter $r$ in the low-level focal search which controls trading off low level work (finding paths) and high level work (resolving conflicts), allowing us to find low-level plans faster with a little more high level work than current suboptimal CBS methods.}
    \label{fig:illustrative-example}
    \vspace*{-2mm}
\end{figure}

Multi-Agent Path Finding (MAPF) is the problem of computing collision-free paths for a team of agents in a known environment while minimizing a measure of their travel times.
This is required for several real-world tasks such as the smooth operation of automated warehouses~\cite{li2020lifelong}, robot soccer~\citep{biswas2014opponent}, collaborative manufacturing~\cite{sun2002adaptive}, coverage~\cite{kusnur2021planning}, and others.
MAPF is a challenging problem and is shown to be NP-complete~\citep{ratner1986finding}.

Prioritized Planning (PP)~\citep{erdmann1987multiple} is a fast multi-agent planning approach that sequentially plans agents avoiding earlier agents with better ``priority", and has been applied to several domains ~\citep{wu2020multi,vcap2015prioritized,velagapudi2010decentralized}. However PP provides no guarantees on completeness or bounded suboptimality.

Conflict-Based Search (CBS) is a popular complete and optimal MAPF solver that employs a low-level single agent planner and a high-level constraint tree (CT) to resolve conflicts.
Several methods speed up CBS by reducing the CT size by explicitly pruning branches, selectively expanding branches, adding sets of constraints, detecting symmetries, and improving high-level heuristics~\citep{boyarski2015icbs, fawarecp,li2019improved,li2020new,srli2021}.

Enhanced CBS (ECBS)~\citep{barer2014suboptimal} introduced the first bounded-suboptimal version of CBS, utilizing a focal search on the high level as well as another focal search planner on the low level that minimizes path conflicts with other agents and therefore decreases the CT size. ECBS specifically mentions how modifying the low level planner to use a weighted cost-to-go heuristic returns paths with many conflicts, leading to a larger CT tree and proved ``ineffective in [their] experiments" as direct motivation for reducing the path conflits instead. Explicit Estimation CBS (EECBS)~\citep{li2021eecbs} replaces ECBS's high level focal search with Explicit Estimation Search~\citep{thayer2011bounded}  but keeps the same low level focal search. Continuous-time CBS (CCBS)~\citep{continuouscbs} incorporates Safe Interval Path Planning (SIPP)~\citep{Phillips2011SIPPSI} to speed up the low level search by reasoning about waits but also does not employ a weighted heuristic.
To the authors' best knowledge, no prior work has effectively used a weighted cost-to-go heuristic in any manner in the CBS framework, with the prevailing norm that doing so would lead to more conflicts, reduce performance, or remove bounded sub-optimality.

Our initial insight is that we can use the weighted cost-to-go heuristic along with the conflict heuristic. We introduce the first bounded sub-optimal CBS methods that incorporate a weighted cost-to-go heuristic with the conflict heuristic within CBS's single agent planner. 
We find that performance (and large performance gains) is controlled by the relative weight of the cost-to-go and conflict heuristic which allows us to trade off low-level and high-level work, see Figure \ref{fig:illustrative-example}. This runs counter to the experience of researchers familiar with single agent planning where we would expect the weighted cost-to-go heuristic to improve performance by finding paths faster. 
Our contributions are:

\begin{enumerate}
    \item Incorporating the weighted cost-to-go heuristic in the open queue, and studying how the path lower bounds interact with certain CBS improvements.
    \item Combining the weighted cost-to-go heuristic with a weighted conflict heuristic ratio in the focal queue and demonstrating that the relative weight, not the weighted heuristic, dictates performance. We provide additional analysis and show how this behaviour is novel in respect to existing CBS and single-agent planning intuition. 
    \item Reducing PP to a particular step of suboptimal CBS and showing that our methods are the natural generalization. 
\end{enumerate}

\section*{Incorporating Weighted Cost-to-go Heuristic}

CBS utilizes an optimal space-time A* low level planner with a precomputed cost-to-go heuristic that measures the optimal distance to goal ignoring conflicts. Bounded sub-optimal CBS methods (e.g. ECBS, EECBS) modify the single agent planner to a focal search that computes $w_{so}$ sub-optimal path that minimizes the number of conflicts with other agents (which reduces future constraints in the CT). 
The low level planner must also return a lower bound on the optimal solution cost which is required for certain CBS improvements, specifically prioritized conflicts and symmetry reasoning, see \citet{li2021eecbs} for full justification. 
The low level focal search has two queues; OPEN which searches over optimal paths (paths sorted by cost) and maintains an optimality bound, and FOCAL which prioritizes $w_{so}$ sub-optimal paths with fewer conflicts (paths sorted by conflicts). We specifically discuss our method in relation to EECBS as it was shown to outperform ECBS and other MAPF planners, but our method is directly usable in ECBS and any other bounded sub-optimal CBS planner using a low level focal planner (see Table \ref{tab:suboptimality-joined-summary}). 

Our main idea is to incorporate a $w_h$ weighted cost-to-go heuristic in the single agent planner in two ways: one in OPEN independent of the conflict heuristic and the other in FOCAL along with the conflict heuristic. Algorithm \ref{alg:ecbs-low-level} showcases EECBS's general low level search pseudocode, with W-EECBS changes highlighted in blue. Ties in FOCAL are broken by $f_{open}$. The user's suboptimality hyper-parameter $w_{so}$ is assumed to be fixed and outside our optimization.

\textbf{Intuition:} Our intuition is that by incorporating the cost-to-go weight with the conflict heuristic, we should be able to plan faster while also minimizing conflicts. This intuition is informed  by single-agent planning where weighting the cost-to-go heuristic speeds up search. We therefore think that increasing $w_h$ alongside using the conflict heuristic in FOCAL will result in the low-level planner finding similar quality paths faster without many additional conflicts, resulting in speed ups. However, we find that increasing $w_h$ does \textit{not} help performance significantly. Instead we find that modulating the conflict heuristic's relative weight in FOCAL substantially improves overall performance. 


\begin{algorithm}[htb]
\caption{Suboptimal CBS low level focal search planner}
\label{alg:ecbs-low-level}
\textbf{Input}: $n_{start}$, atGoal(), Paths $P_I$ of other agents \\
\textbf{Output}: Lower bound $LB$ on optimal path cost, Path from $n_{start}$ with sub-optimality $\leq w_{so}$ (i.e. cost $\leq w_{so}*LB$)
\begin{algorithmic}[1] 
\State \textcolor{blue}{Set$W_f$()}
\State OPEN = FOCAL = $\{n_{start}\}$, $LB = F_{best} = 0$
\While{FOCAL $\neq \emptyset $}
    \State $n \gets$ FOCAL.pop()
    \State OPEN.remove($n$)
    \State $LB \gets $ $\max(LB,$ \textcolor{blue}{UpdateLowerBound()}$)$
    \If{atGoal($n$)}
        \State \textbf{return} $LB$, Solution backtracking from $n$
    \EndIf
    \For{$n'$ $\in$ succ($n$)}
        \State $g \gets n.g + cost(n,n')$
        \State $h \gets$ getCostToGoHeuristic($n'$)
        \State $n'.F_{open} \gets$ \textcolor{blue}{$f_{open}(g, h)$}
        \State OPEN.insert($n'$)
        \State $c \gets$ getNumConflictsFromPaths($n'$, $P_I$)
        \State $n'.F_{focal} \gets$ \textcolor{blue}{$f_{focal}(g, h, c)$}
    \EndFor
    
    \Statex \Comment{Update FOCAL}
    \State $F_{best} \gets min_{k \in OPEN}\ k.F_{open}$
    \ForAll {$n' \in OPEN, \notin FOCAL $} 
        \If {$n'.F_{open} \leq \textcolor{blue}{w_f}*F_{best}$}
            \State FOCAL.insert($n'$)
        \EndIf
    \EndFor
\EndWhile
\State \textbf{return} $NaN$, No solution
\Statex
\Procedure{${f}_{open}$}{$g,h$}:
    \State \textbf{return} $g+h$
\EndProcedure
\Procedure{$f_{focal}$}{$g,h,c$}:
    \State \textbf{return} $c$
\EndProcedure
\Procedure{Set$W_f$()}{}:
    \State $w_f \gets w_{so}$
\EndProcedure
\Procedure{UpdateLowerBound()}{}:
    \State \textbf{return} $F_{best}$
\EndProcedure
\end{algorithmic}
\end{algorithm}

\subsection{Weighted Open Variant (WO-EECBS)}
OPEN's priority function is weighted by $w_h$, while FOCAL remains unchanged, prioritized by the number of conflicts. To maintain our overall suboptimality bound, the focal bound $w_f$ is scaled to $w_{so}/w_h$ which constrains $w_h \in [1,w_{so}]$ as we need $w_f \geq 1$. Since the f-values in OPEN are now weighted by $w_h$, we obtain a lower bound on the optimal path cost by scaling the minimum f-value in OPEN, $F_{best}$, to $F_{best}/w_h$. Note that $w_h=1$ trivially results in regular EECBS.

One side effect of this method is that this naively computed lower bound is usually substantially lower than the optimal path cost even though the path may not have been very sub-optimal.
Several papers have discussed this pessimistic lower bound in weighted A* single agent search \cite{optimisticsearch, improvedLowerBound}.
This pessimistic lower bound should then theoretically reduce the amount of prioritized conflicts (PC) and symmetry reasoning (SR) applied. We therefore \textit{a posteriori} compute a better lower bound using \citet{improvedLowerBound} and test if this increases the usage of PC and SR, and boosts performance.

\begin{algorithm}[htb]
\caption{Weighted Open modifications}
\label{alg:algorithm}
\textbf{Parameters}: Cost-to-go heuristic weight $w_h$
\begin{algorithmic}[1] 
\Procedure{${f}_{open}$}{$g,h$}:
    \State \textbf{return} $g+w_h*h$
\EndProcedure
\Procedure{Set$W_f$()}{}:
    \State $w_f \gets w_{so}/w_h$
\EndProcedure
\Procedure{GetLowerBound()}{}: \Comment{Naive}
    \State  \textbf{return} $F_{best}/w_h$
\EndProcedure
\Procedure{GetLowerBound()}{}: \Comment{Improved}
    \State $g_{min} \gets min_{n \in OPEN}\ n.g$
    \State \textbf{return} $(F_{best} + (w_h-1)*g_{min})/w_h$
\EndProcedure
\end{algorithmic}
\end{algorithm}

\begin{algorithm}[htb]
\caption{Weighted Focal modifications}
\label{alg:algorithm}
\textbf{Parameters}: $w_h$, Relative conflict weight  $r$
\begin{algorithmic}[1] 
\Procedure{$f_{focal}$}{$g,h,c$}:
    \State \textbf{return} $g+w_h*(h+r*c)$
\EndProcedure
\end{algorithmic}
\end{algorithm}

\subsection{Weighted Focal Variant (WF-EECBS)}
We keep OPEN unweighted and instead incorporate the weighted heuristic in FOCAL along with the inadmissible conflict heuristic. This requires us to balance the importance of these competing heuristics in FOCAL's priority function $g+w_h*(h+r*c)$ with $w_h \geq 1, r \geq 0$.
Manipulating $r$ changes the relative importance of finding a solution fast (lower $r$) vs avoiding conflicts (higher $r$). Note that $w_h=1$ and $r \to \infty$ results in regular EECBS (preferring paths with lowest conflicts). 
Due to the use of FOCAL, $w_h$ (and $r$) can be arbitrarily large and is not bounded by $w_{so}$ unlike WO-EECBS. In our experiments we see that WF-EECBS significantly outperforms WO-EECBS and EECBS, therefore Weighted EECBS (W-EECBS) refers to this weighted focal version.

\begin{lemma} WO-EECBS and WF-EECBS are both $w_{so}$ sub-optimal.
\end{lemma}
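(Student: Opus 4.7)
The plan is to reduce this to a standard fact about bounded sub-optimal CBS: if the low-level planner always returns a path of cost at most $w_{so}\cdot LB$ where $LB$ is a valid lower bound on the optimal constrained path cost $C^*$, then the high-level EECBS search (unmodified here) produces a joint plan of cost at most $w_{so}$ times the optimal joint cost. This reduction is exactly the argument used by \citet{barer2014suboptimal} and \citet{li2021eecbs}; our only job is therefore to verify the two low-level invariants for each variant separately, namely (i) $LB \le C^*$ and (ii) returned path cost $\le w_{so}\cdot LB$.

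\noindent\textbf{WF-EECBS (easy case first).} Here OPEN is unchanged, so $f_{open}(n)=g(n)+h(n)$ with admissible $h$. Standard A* gives that some optimal-path node $n^*$ always sits in OPEN with $g(n^*)+h(n^*)\le C^*$, so $F_{best}=\min_{k\in OPEN}k.F_{open}\le C^*$, hence invariant (i) holds with $LB=F_{best}$. Because $w_f=w_{so}$, when a goal node $n_g$ (where $h=0$) is popped from FOCAL we have $g(n_g)=f_{open}(n_g)\le w_{so}\cdot F_{best}=w_{so}\cdot LB$, giving (ii). The extra terms $w_h h+w_c c$ only reorder FOCAL and do not influence the OPEN invariant, so reweighting inside FOCAL is ``free''.

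\noindent\textbf{WO-EECBS (main case).} Now $f_{open}(n)=g(n)+w_h h(n)$ with $w_h\ge 1$, and $w_f=w_{so}/w_h\ge 1$. The key step is to show that the naive bound $F_{best}/w_h$ is a lower bound on $C^*$: for any optimal-path node $n^*\in OPEN$, admissibility gives $g(n^*)+h(n^*)\le C^*$, so
\[
F_{best}\le g(n^*)+w_h h(n^*)\le w_h\bigl(g(n^*)+h(n^*)\bigr)\le w_h C^*,
\]
establishing (i) with $LB=F_{best}/w_h$. For (ii), a goal node popped from FOCAL has $h=0$, hence $g(n_g)=f_{open}(n_g)\le w_f\,F_{best}=(w_{so}/w_h)F_{best}=w_{so}\cdot LB\le w_{so}C^*$. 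The only genuinely delicate part is verifying that the \emph{improved} lower bound $(F_{best}+(w_h-1)g_{min})/w_h$ is also $\le C^*$; I would argue it by rewriting admissibility as $h(n^*)\le C^*-g(n^*)$ to get $F_{best}\le g(n^*)+w_h h(n^*)\le w_h C^*-(w_h-1)g(n^*)$, then using $g(n^*)\ge g_{min}$ and $w_h-1\ge 0$ to conclude $F_{best}+(w_h-1)g_{min}\le w_h C^*$.

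\noindent\textbf{Main obstacle.} The only nontrivial point is the improved-lower-bound calculation for WO-EECBS, because it mixes admissibility applied at an unknown optimal-path node with a minimum taken over \emph{all} OPEN nodes; everything else is bookkeeping on top of the standard EECBS correctness proof, which we invoke as a black box to lift low-level $w_{so}$-sub-optimality to the full multi-agent solution.
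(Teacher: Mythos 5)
Your proposal is correct and follows essentially the same route as the paper: split suboptimality into the unchanged high-level search plus the low-level guarantee, then show FOCAL's $w_f$ bound composes with the OPEN weighting to give $w_f\cdot w_h = w_{so}$ for WO-EECBS and trivially $w_f = w_{so}$ for WF-EECBS. You are simply more explicit than the paper (which states the multiplicative argument in two lines), and your verification of the improved lower bound $(F_{best}+(w_h-1)g_{min})/w_h \le C^*$ is a correct extra detail that the paper delegates to the cited lower-bound reference rather than proving in the lemma.
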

\begin{proof}
EECBS's overall optimality is split between the high-level CT sub-optimal search and the low-level sub-optimal search. Since the high-level search is unchanged and identical to EECBS, we just need to prove that WO-EECBS and WF-EECBS have the same low-level sub-optimality $w_{so}$ as EECBS.

In WO-EECBS: FOCAL returns a node at most $w_f$ sub-optimal compared to OPEN which is weighted by $w_h$. Our overall optimality is then $w_f*w_h = w_{so}/w_h*w_h = w_{so}$.

In WF-EECBS: FOCAL's sub-optimality is fixed regardless of $f_{focal}$, and OPEN is optimal, so our overall optimality is trivially $w_f = w_{so}$.
\end{proof}

\subsection{Relating CBS, Prioritized Planning, and W-EECBS}
CBS-based algorithms and PP are usually treated as distinct categories of MAPF search based methods. \citet{ma2019searching} introduces priorities in CBS as a distinction to regular CBS and \citet{li2022mapf-lns2} employs a modified PP planner that return paths with least conflicts, but neither attempt to relate PP and CBS.

Here we prove that PP is actually equivalent to the first step of generating the initial agent paths in the root CT node in EECBS (and other bounded sub-optimal CBS planners like ECBS) with an infinite sub-optimality. With $w_{so} = \infty$ in EECBS, all states in OPEN in the single agent planner are inserted into FOCAL, and therefore expansions are sorted first by their number of conflicts, and then the path f-value. In the root CT node, agents will try to avoid all previous agents and search over all conflict=0 paths, then conflict=1 after exhausting all conflict=0 paths, then conflict=2, etc. This first step is identical to PP; EECBS with $w_{so} = \infty$ differs only in its ability to continue planning over conflicts while PP fails in that scenario. To the authors' knowledge, this is the first time there has been an explicit relation between sub-optimal CBS and PP. WO-EECBS and WF-EECBS are the two generalized methods combining the weighted low-level planner commonly used in PP with EECBS's conflict resolution mechanism.

\section*{Experimental results}
We test our methods with different numbers of agents, in increments of 50, on 8 diverse maps (titled in each plot) from \citet{stern2019mapfbenchmark} and report the mean values across 5 seeds. Table \ref{tab:maps-stats} shows the diversity of the maps; plots contain the maps in the same order sorted by decreasing free states. 

\begin{table}[h]
\centering
\resizebox{0.45\textwidth}{!}{
\begin{tabular}{|c|c|c|c|}
\hline
Map name & Max \# agents & Raw Area & \# free states \\ \hhline{|=|=|=|=|}
Paris\_1\_256 & 1000 & 256x256 & 47240 \\ \hline
den520d & 1000 & 256x257 & 28178 \\ \hline
ht\_chantry & 1000 & 162x141 & 7461 \\ \hline
den312d & 1000 & 65x81 & 2445 \\ \hline
empty-48-48 & 1000 & 48x48 & 2304 \\ \hline
empty-32-32 & 500 & 32x32 & 1024 \\ \hline
random-32-32-10 & 450 & 32x32 & 922 \\ \hline
random-32-32-20 & 400 & 32x32 & 819 \\ \hline
\end{tabular}}
\caption{ \footnotesize
\textbf{Map statistics |} We show the maximum number of agents, height by width raw area, and number of free states on each of the eight maps that we use to evaluate our methods. Figures are sorted in this same order horizontally (top left subplot will be the largest map, bottom right will be the smallest), to showcase the relationship of performance with map size.}
\label{tab:maps-stats}
\end{table}

We use $w_{so}=2$ and a timeout of $300$ seconds in all our experiments unless otherwise specified.
In all figures, if a method failed (timed out on all 5 seeds) on a particle number of agents on a map, we do not report larger number of agents, see Appendix Section \ref{section:appendix-justify-timeouts} for full justification. The speed up $S_{method} = T_{baseline}/T_{method}$ is reported to normalize differences in hardware, where the baseline is the unweighted method (ECBS
or EECBS) based on context. In all tables, speeds up are computed only on instances where the baseline did not timeout.








\begin{figure*}[p]
    \centering
    \includegraphics[width=1\textwidth]{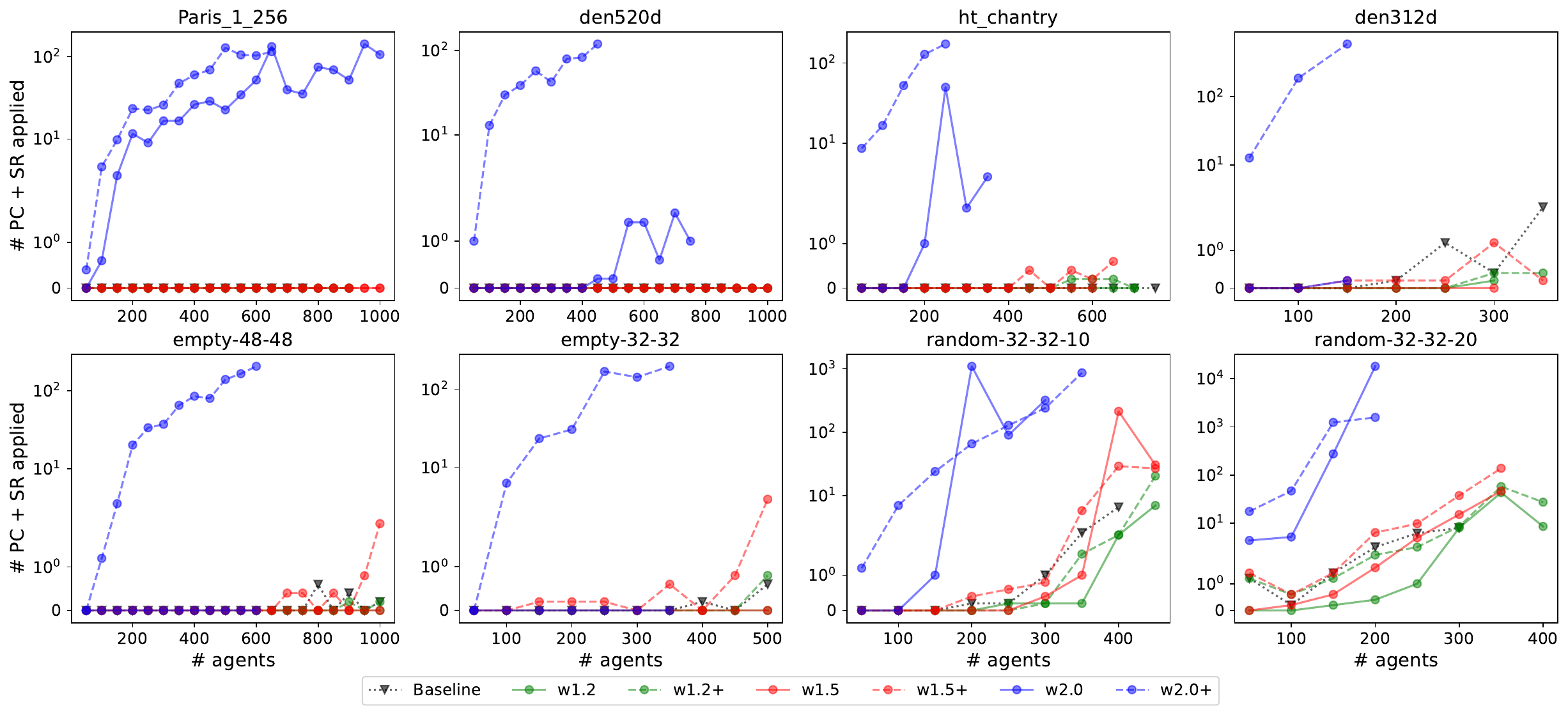}
    \caption{\textbf{Viewing the effect of improving the lowerbound on the WO-EECBS |} The ``+" label denotes using an improved lowerbound; improving the lowerbound leads to a significant higher usage of CBS improvements with the y-axis denoting the average number (across 5 seeds) of cardinal conflicts and symmetry reasoning applied for each problem instance. Without the improved lower bound, WO-EECBS is usually unable to use these CBS improvements. Methods terminate on a map once they fail all 5 seeds on a certain number of agents or they reach the maximum number of agents in a scene, fractional values are due to the averaging over 5 seeds.
    }
    \label{fig:weightedAnchorLBImprovements}
\end{figure*}

\begin{figure*}[p]
    \centering
    \includegraphics[width=1\textwidth]{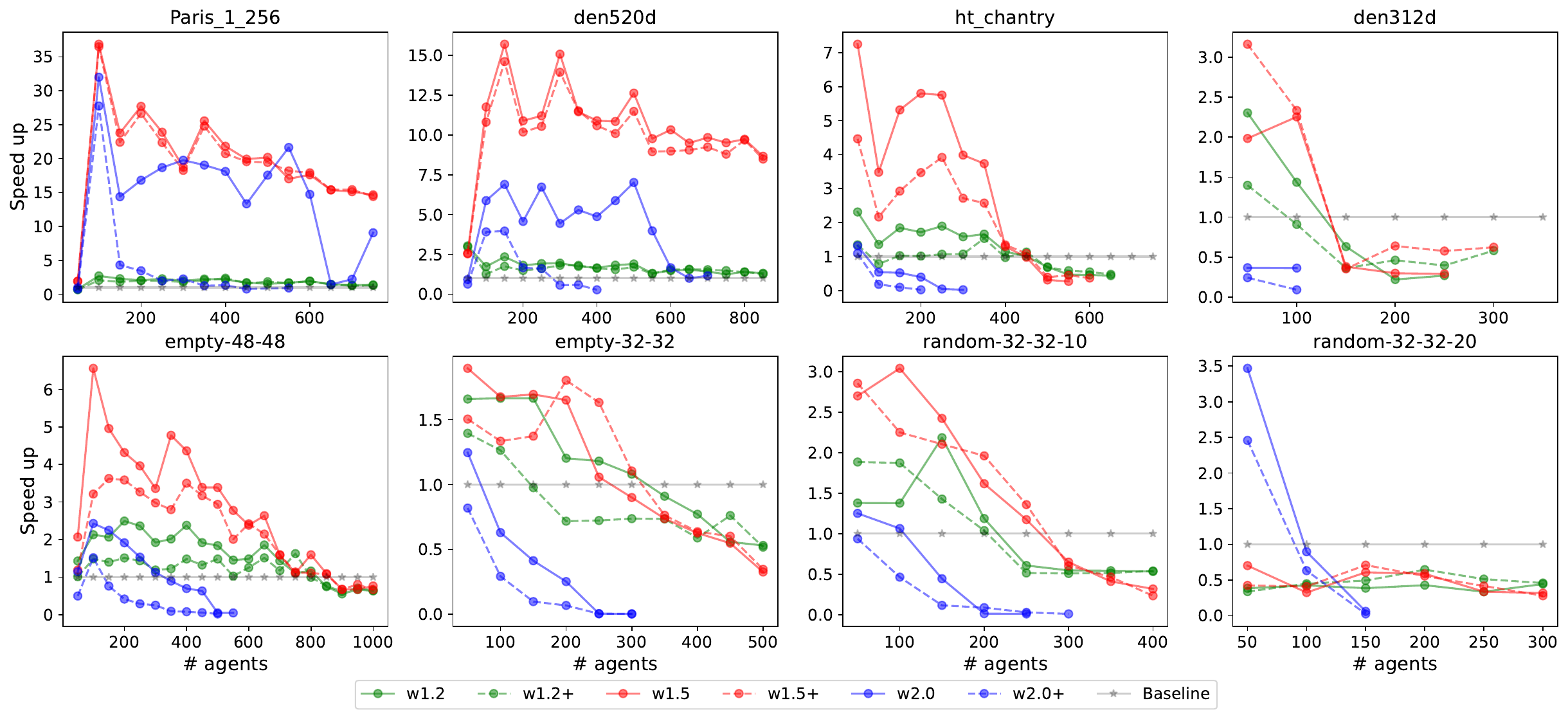}
    \caption{\textbf{WO-EECBS full results |} The ``+" label denotes using an improved lowerbound. A medium weighted value of w=1.5 performs the best on both maps. However, the maximum speed up peaks at around 35 in the Paris scenario and it struggles in all the harder scenarios (e.g. both random maps, empty-32-32, den312d). Additionally the improved lowerbound actually decreases performance contrary to our expectations.}
    \label{fig:weightedAnchorResults}
\end{figure*}

\begin{figure*}[p]
    \centering
    \includegraphics[width=0.98\textwidth]{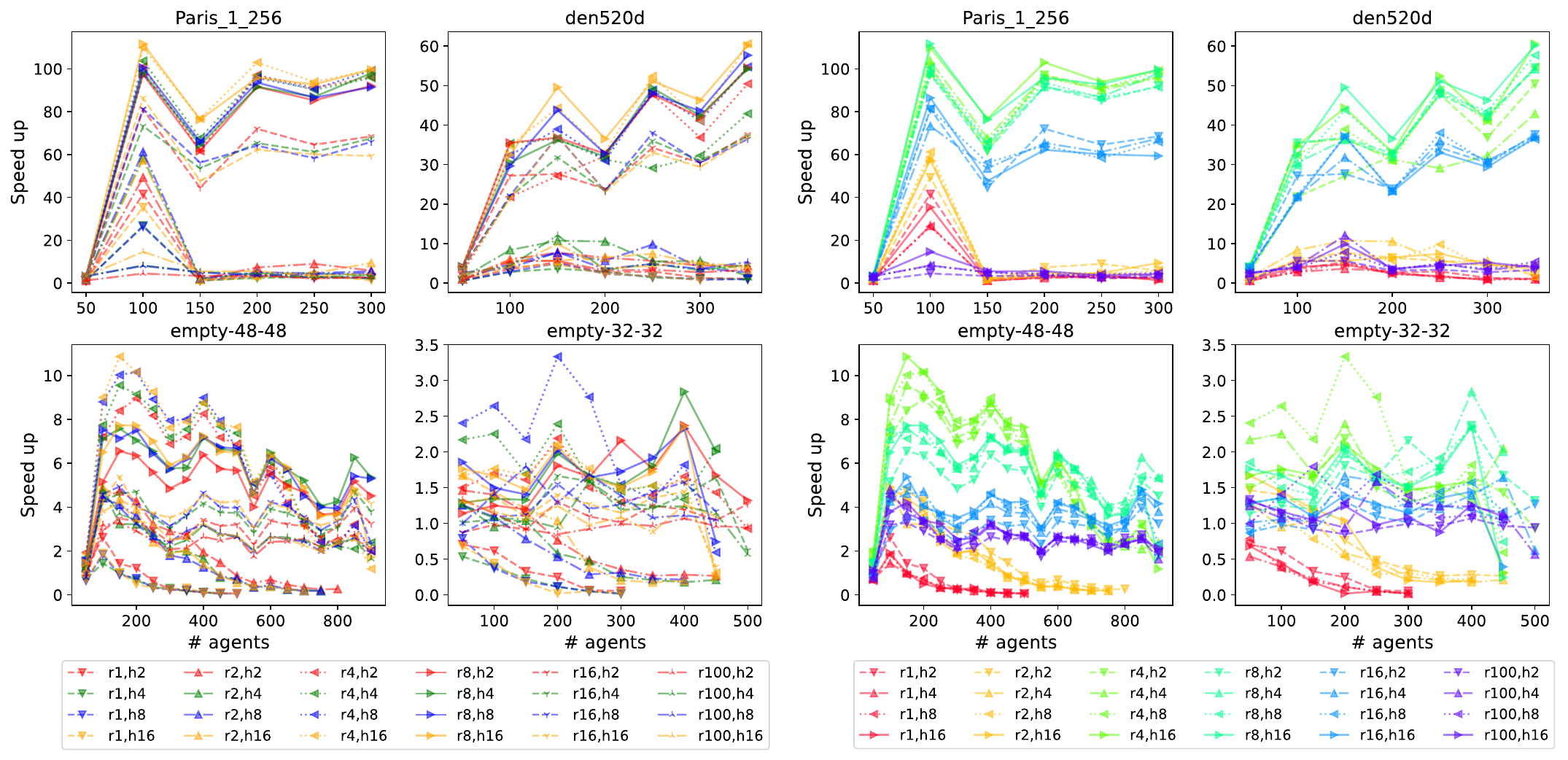}
    \caption{\textbf{WF-EECBS $w_h, r$ Analysis |} 
    Left half: Lines with the same color have the same $w_h$ values, each $w_h$ color had 6 different line/marker styles corresponding to different $r$ values. We expect performance to be primarily driven by $w_h$, with methods with same $w_h$ values performing similarly and larger $w_h$ resulting in larger speed ups. However we see that methods with the same $w_h$ value (e.g. all the blue lines) are wildly scattered. \\
    Right half: Lines with the same color have the same $r$ values. We see a striking grouping effect across runs with same $r$ values but different $w_h$ values, showing how performance is tightly linked to $r$ and \textit{not} $w_h$ across different $r$ and $w_h$ values across all the maps. 
    Additionally, values too low $r=1,2$ (red, yellow) and too high $r=16,100$ (blue, purple) perform worse than $r=4,8$ (lime green, turquoise), implying some optimal region of $r \in [2,16] $. 
    }
    \label{fig:weightedFocal-ratio-full}
\end{figure*}

\begin{figure*}[p]
    \centering
    \includegraphics[width=0.93\textwidth]{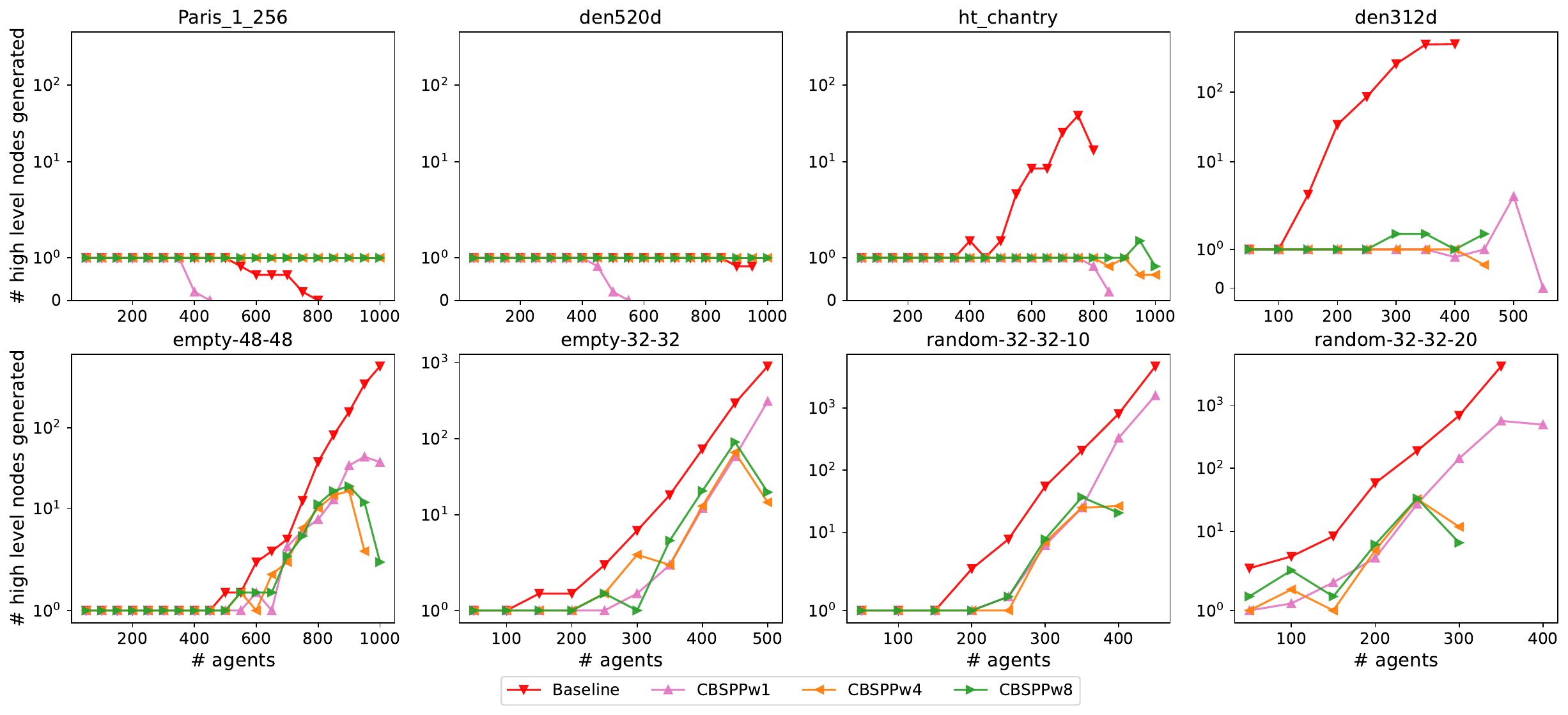}
    \caption{\textbf{PP, CBS, W-EECBS equivalence |} We see that running ``CBSPP" (WO-EECBS with a very large suboptimality factor simulating $w_{so} \gets \infty, w_h=w$, or equivalently WF-EECBS with $w_{so}, r \gets \infty, w_h = w$) is equivalent to running $w$ weighted prioritized planning with CBS's conflict resolution capability. This is highlighted by the number of high level nodes sticking to one with low numbers of agents (identical to PP) as opposed to the baseline with several high level nodes, and then increasing only after conflicts are forced. Observe how the larger maps (top row) are able to be solved in only one high level node (i.e. no conflict resolution required), but smaller maps require reasoning over conflicts. Note the differing log y-axis values across different graphs. Fractional values due to averaging across 5 seeds.}
    \label{fig:cbspp-node-full}
\end{figure*}

\subsection{Weighted Open}
Overall, performance with the weighted anchor variant is very varied based on the map; it provides large speed ups (10+) in 2, medium (1-5) in 3, and hurts (0-1) in 3. 
Figure \ref{fig:weightedAnchorLBImprovements} shows that improving the lower bound on the usage of CBS improvements does lead to higher utilization. Contrary to our expectations, Table \ref{tab:openWeightedSummary} and Figure \ref{fig:weightedAnchorResults} reveal this results in worse performance even though this computation has negligible overhead.

The relative performance of $w_h$ in WO-EECBS fits our intuition with larger $w_h$ helping to a certain extent and then hurting due to the interplay with the focal queue. Concretely, WO-EECBS with a ``saturated" anchor weight of 2 provides lower speed-ups as the focal queue in that instance has $w_f = w_{so}/w_{so} = 1$ and thus flexibility to reduce the number of collisions.
The next section demonstrates that this variant is dominated by the weighted focal variant.

\begin{table}[htb]
\centering
\resizebox{0.45\textwidth}{!}{
\begin{tabular}{|c|c||c|c|c|c|}
\hline
\multicolumn{2}{|c||}{Method} & \multicolumn{2}{|c|}{Speed up } & \multirow{2}{*}{\parbox{0.35\linewidth}{\centering $\%$ instances faster than Baseline}} & \multirow{2}{*}{\# solved} \\ \cline{1-4}
$w_h$ & LB+ & Max & Median & & \\ \hhline{|=|=#=|=|=|=|} 
1 & False & 1 & 1 & N/A, is baseline & 98 \\ \hline
1.2 & False & 3 & 1.4 & 69\% & 102 \\ 
1.2 & True & 3 & 1.2 & 61\% & \textbf{103} \\ \hline
1.5 & False & \textbf{37} & \textbf{2.7} & 70\% & 101 \\ 
1.5 & True & 36 & 2.4 & \textbf{72\%} & \textbf{103} \\ \hline
2 & False & 32 & 0.66 & 39\% & 67 \\ 
2 & True & 28 & 0.31 & 14\% & 49 \\ \hline
\end{tabular}}
\caption{{\footnotesize
    \textbf{WO-EECBS Results |} 
We report the max and median speed up across all 8 maps, as well as the number of instances solved and better than the baseline.
We see that $w_h=1.5$ produces the best speed up and that in general improving the lower bound (LB+ set to True) decreases performance.}}
\label{tab:openWeightedSummary}
\vspace*{-2mm}
\end{table}

\begin{table}[h]
\centering
\resizebox{0.45\textwidth}{!}{
\begin{tabular}{|c|c||c|c|c|c|}
\hline
\multicolumn{2}{|c||}{Method} & \multicolumn{2}{|c|}{Speed up } & \multirow{2}{*}{\parbox{0.22\linewidth}{\centering $\%$ faster than Baseline}} & \multirow{2}{*}{\# solved} \\ \cline{1-4}
$r$ & $w_h$ & Max & Median & & \\ \hhline{|=|=#=|=|=|=|} 
$\infty$ & 1 & 1 & 1 & - & 98 \\ \hline
2.5 & 8 & 84 & 5.5 & 77\% & 106 \\ 
2.5 & 16 & 112 & 6.8 & 79\% & 104 \\ \hline
5 & 4 & \textbf{131} & 9.5 & \textbf{92\%} & \textbf{113} \\ 
5 & 8 & 127 & 9.4 & 89\% & 109 \\ 
5 & 16 & 113 & \textbf{11} & 88\% & 109 \\ \hline
10 & 4 & 109 & 5.1 & 90\% & 111 \\ 
10 & 8 & 91 & 7.2 & 89\% & 110 \\ \hline
\end{tabular}}
\caption{\footnotesize \textbf{WF-EECBS Results |} Comparing against Table \ref{tab:openWeightedSummary} we see that WF-EECBS greatly outperforms WO-EECBS and the baseline in the majority of instances. The first row describes the EECBS baseline in WF-EECBS parameters.}
\label{tab:focalSummary}
\vspace*{-2mm}
\end{table}

\subsection{Weighted Focal}
Table \ref{tab:focalSummary} demonstrates that WF-EECBS's speed up is consistently higher than the baseline and WO-EECBS.
Overall WF-EECBS helps on 7 out of 8 maps, providing large speed ups (10+) on three and massive speed ups (50-100+) on two. Weighted EECBS (W-EECBS) therefore refers to this weighted focal version.

Figure \ref{fig:weightedFocal-ratio-full} show a surprising and important relationship between the cost-to-go weight $w_h$ and relative conflict weight $r$ in WF-EECBS; the performance is dominated by $r$ rather than $w_h$, with optimal values $r \in [2,16]$. The relative weight $r$ explicitly dictates the tradeoff between searching longer to avoid a future conflict or planning shorter and incurring the conflict which will need to be resolved by the constraint tree afterwards. Regular EECBS lacks this flexibility and with $r \to \infty$ will prioritize planning longer to avoid conflicts. Table \ref{tab:focalSummary} shows that increasing $w_h$ with the same $r$ usually but not necessarily increases median speed up.


\begin{table}[h]
\centering
\resizebox{0.45\textwidth}{!}{
\begin{tabular}{|c|c||c|c|c|c|}
\hline
\multicolumn{2}{|c||}{Method} & \multicolumn{2}{c|}{Speedup} & \multicolumn{1}{c|}{\multirow{2}{*}{\parbox{0.22\linewidth}{\centering $\%$ faster than Baseline}}} & \multicolumn{1}{c|}{\multirow{2}{*}{\parbox{0.2\linewidth}{\centering \# solved vs Baseline}}} \\ \cline{1-4}
-CBS & \multicolumn{1}{c||}{$w_{so}$} & \multicolumn{1}{c|}{Max} & \multicolumn{1}{c|}{Median} &  &  \\ \hhline{|=|=#=|=|=|=|} 
E & 1.01 & 1.7 & 0.45 & 23\% & 30/42 \\
EE & 1.01 & 1.8 & 0.71 & 32\% & 36/46 \\ \hline
E & 1.1 & 9.9 & 2.6 & 75\% & 71/76 \\
EE & 1.1 & 10 & 2.3 & 73\% & 68/75 \\ \hline
E & 1.2 & 16 & 3.7 & 80\% & 83/77 \\
EE & 1.2 & 22 & 3.1 & 80\% & 79/77 \\ \hline
E & 1.5 & 35 & 3.5 & 83\% & 95/73 \\
EE & 1.5 & 47 & 3.7 & 78\% & 94/73 \\ \hline
E & 2 & 88 & 5.5 & 85\% & 108/77 \\
EE & 2 & 88 & 5.5 & 85\% & 105/77 \\ \hline
E & 4 & 137 & 7.8 & 91\% & 110/66 \\
EE & 4 & 137 & 8.5 & 92\% & 110/67 \\ \hline
E & 8 & 164 & 6.9 & 91\% & 112/69 \\
EE & 8 & 164 & 9.6 & 91\% & 111/68 \\ \hline
\end{tabular}}
\caption{\footnotesize \textbf{Generalizing weighting FOCAL to different suboptimal CBS methods and suboptimalities. |} We compare the effect of weighting FOCAL on both ECBS and EECBS across different suboptimalities. We use $r=5, h=8$ and a timeout of 60 seconds across all experiments, and report statistics as in Table \ref{tab:openWeightedSummary}. The last column shows the number of instances solved (numerator) vs the baseline (denominator). We see that incorporating the weighted FOCAL hurts at a very low suboptimality $w_{so}=1.01$, but then produces large benefits for $w_{so} \geq 1.5$. Additionally, we see very similar speedups across different methods at the same suboptimality, demonstrating how our method's benefits are generalizable across different suboptimal CBS methods.}
\label{tab:suboptimality-joined-summary}
\end{table}

We check how incorporating the weighted heuristic generalizes across different methods and suboptimalities. Table \ref{tab:suboptimality-joined-summary} shows the effect of using a weighted focal with $r=5, h=8$ across different suboptimalities on ECBS and EECBS. These hyper-parameters were chosen solely based on Table \ref{tab:focalSummary} (on EECBS with $w_{so}=2$) and were intentionally not optimized for this experiment.
We employ a timeout of 60 seconds and compare against the corresponding unweighted baseline. We observe that our weighted focal hurts at a very low suboptimality $w_{so}=1.01$ but then steadily results in larger performance boosts as $w_{so}$ increases. In particular, for large suboptimalities $w_{so} \geq 1.5$, we see that our weighted methods start to solve significantly more instances than the baseline. For $w_{so} \geq 2$, we also get large and consistent speed up benefits ($> 80\%$ faster than baseline, median speed up $> 5$) and with $w_{so} = 4, 8$ solve almost double the number of instances as their unweighted baseline. It is important to observe that our method produces very similar speeds up in both ECBS and EECBS.
This highlights how our technique identically speeds up the low level planner regardless of the high level search, demonstrating how our technique is readily generalizable to future suboptimal CBS methods. 


\subsection{Understanding Weighted Suboptimal CBS}
Figure \ref{fig:weightedFocal-ratio-full} surprised us as it contradicted our initial intuition from single agent planning that the cost-to-go weight $w_h$ would have a direct effect on performance, with larger $w_h$ generally causing larger speed ups. Instead, the relative conflict weight $r$ primarily dictates performance, with increasing $w_h$ only marginally increasing performance given $r$. 
Table \ref{tab:understanding} shows how $r$ directly controls the balance between low level and high level work in WF-EECBS. Decreasing $r$ from $\infty$ (which is what EECBS implicitly has) shifts the work load from the low level to the high level search.
Across all 8 maps, we generally observe that each WF-EECBS's low level planner call does 5-150x fewer low level expansions than EECBS while WF-EECBS's CT search does 2-20x more work than EECBS, resulting in a net reduction of 5-100x less total low-level nodes expanded. The increased numbers of CT nodes does add high level overhead, resulting in a sweet spot in the middle.
This leads us to believe that current Suboptimal CBS methods place too large a burden on the low level search; each CT node requires too many expansions to find a suboptimal path with minimal conflicts. Our WF method is able to ``balance" the low level and high level work better, and improves performance by expanding more CT nodes significantly faster than their unweighted counterpart.
Figure \ref{fig:illustrative-example} illustrates this effect of using our Weighted Focal method compared to CBS and current Suboptimal CBS methods.

The role of $r$ might be obvious in retrospect but the impact leads to a novel insight; minimizing conflicts as a separate mechanism loosely connected to path length (as currently done in FOCAL) can lead to the low-level planner doing significantly too much work.
We believe this finding has direct relevance for future Suboptimal CBS based methods, and even non-CBS MAPF methods like MAPF-LNS2 \citep{li2022mapf-lns2} whose low level planner minimizing unweighted conflicts will likely suffer similarly. We predict that using an explicit trade off with $r$ will better balance low-level and high-level work and improve performance.

\begin{table}[tb]
\centering
\resizebox{0.49\textwidth}{!}{
\begin{tabular}{|c|c||c|c|c|c|} \hline
\multicolumn{2}{|c|}{Method} & \multirow{2}{*}{\parbox{0.22\linewidth}{\centering Total \# Low Level Nodes}} & \multirow{2}{*}{\parbox{0.22\linewidth}{\centering \# CT Nodes + bypasses}} & \multirow{2}{*}{\parbox{0.22\linewidth}{\centering \# LL per low level call}} & \multirow{2}{*}{Speed up} \\ \cline{1-2}
$r$ & $w_h$ &  &  &  &  \\ \hhline{|=|=#=|=|=|=|}
$\infty$ & $-$ & 1,046,159 & 20.2 & 6,145 & 1 \\ \hline
16 & 4  & 223,000 & 22.8 & 1,294 & 3.9 \\
8 & 4 & 135,196 & 37.8 & 702 & 6.1 \\
4 & 4 & 51,000 & 69.4 & 228 & 9.9 \\
2 & 4 & 56,224 & 312 & 121 & 1.8 \\ \hline
\end{tabular}}
\caption{{\footnotesize \textbf{Comparing low and high level statistics |} WF-EECBS expands less total low level nodes by generating more CT nodes significantly faster (fewer low level nodes per low level call) than EECBS (top row). Although we show one example ($w_{so}=2$, den312d with 150 agents), we observe the relative conflict weight $r$ controls this balance throughout different $w_{so}$, $w_h$, and scenarios.}}
\label{tab:understanding}
\vspace*{-1em}
\end{table}



\subsection{Relating CBS, Prioritized Planning, and W-EECBS}
We run WO-EECBS with a very large sub-optimality value ($w_{so}=10000$) and different anchor weights to see how this mimics running weighted prioritized planning. We denote these as ``CBSPP" with their specific weights to emphasize the relation. Figure \ref{fig:cbspp-node-full} verifies that the number of generated CT nodes stays at 1 for low levels of agents until conflicts become unavoidable. 
The bottom row plots with CT nodes greater than 1 also demonstrate how CBSPP's ability to replan using CBS's conflict resolution increases success rate compared to prioritized planning which would fail in those instances. 
At the expense of additional engineering complexity, we recommend that practitioners using PP should instead use W-EECBS with a large suboptimality as they get the same prioritized planning behavior in the root node along with the natural robustness and completeness of CBS.

\section*{Future Work and Conclusion}
We see several avenues to directly build upon our work.
Our work keeps $r$ and $w_h$ fixed in MAPF instances; adaptively changing $r$ and $w_h$ during a single MAPF search, or predicting a fixed optimal $r$ and $w_h$ could increase performance and robustness across different maps. 
Determining the reason behind WO-EECBS improved bound's negative performance effect would also be interesting investigative work.

Our experiments provide compelling evidence for MAPF practitioners to use Weighted EECBS and more broadly incorporate relative conflict weights along with cost-to-go heuristics. We first introduce WO-EECBS which incorporating the weighted cost-to-go in the open queue, and analyze the effect of improving the lower bound on utilizing prioritized conflicts and symmetry reasoning. We then introduce WF-EECBS by modifying the focal priority to include a weighted cost-to-go and relative weighted conflict heuristic, and show significant speeds up compared to EECBS. We demonstrate how these speeds ups change across different hyper-parameters $w_{so}, w_h, r$ and different scenario (map sizes, numbers of agents). We provide novel insight that the cost-to-go weight $w_h$ does not primarily impact performance as expected, but that instead the relative weight $r$ dictates performance by trading off low-level and high-level work effectively. We show that our weighted focal technique results in similar speed ups regardless of the high level search, illustrating how our technique is readily generalizable to other suboptimal CBS methods. Finally, we show that PP is actually just one specific step in suboptimal CBS with an infinite sub-optimality, and show Weighted Suboptimal CBS is the natural generalization of the two. 

Overall, our proposed methods bear no additional overhead and are directly usable in other suboptimal CBS planners. More broadly, we hope this work inspires future MAPF work to incorporate the conflict heuristic with more nuance, and shows how a weighted cost-to-go heuristic can be successfully incorporated in CBS based methods.


\bibliography{ref} 

\clearpage
\appendix

\setcounter{figure}{0}
\renewcommand{\thefigure}{A\arabic{figure}}
\setcounter{table}{0}
\renewcommand{\thetable}{A\arabic{table}}

\section{Quick Recap}

\subsection{Recommended background reading}
Before reading this paper, readers new to focal search are recommended to read \citet{anytimefocalsearch}, and readers new to bounded sub-optimal CBS are recommended to read ECBS \citep{barer2014suboptimal}. Readers interested in our lower bound improvements and how they relate to bounded suboptimal CBS should read \citet{improvedLowerBound} and EECBS \citep{li2021eecbs}.

\subsection{Intended takeaways}
\textbf{Main takeaways:} \par
1. We show two ways of incorporating a weighted cost-to-go heuristic in bounded suboptimal CBS (e.g. ECBS, EECBS) and show that these can be used effectively contrary to existing MAPF intuition. \par
2. We find that weighting both the cost-to-go and conflict heuristic can obtain large speed ups by better balancing low-level and high level work via changing the relative importance of finding a solution fast or avoiding conflicts in our low-level focal search. Specifically, FOCAL should be changed from sorted by just $c$ to sorted by $g+w_h*(h+r*c)$ where $c$ is the number of conflicts on the current path, $w_h$ is the weighted cost-to-go hyper-parameter and $r$ is a relative conflict heuristic hyper-parameter. We show that contrary to single agent planning intuition, increasing $w_h$ does \textit{not} primarily improve performance, but that instead $r$ strongly dictates performance with $w_h$ playing a secondary role. We believe this subtly of incorporating the conflict heuristic has been overlooked in existing MAPF work.
Our change can result in large (50+ speed ups) and solve significantly more problem instances. Our performance gain generalizes to both ECBS and EECBS, suggesting it can be directly helpful in other suboptimal CBS methods. \par 
3. Lastly, we show how (weighted) PP is a sub-step of (weighted) suboptimal CBS and how our method relates the two.

\subsubsection{Weighted Open Variant (WO-EECBS):}
We can incorporate a weighted cost-go-heuristic in the open list (OPEN) and keep the focal list (FOCAL) with the conflict heuristic un-changed. Doing so limits $w_h$ by $w_{so}$ as well as reduces the flexibility/effectiveness of FOCAL to maintain bounded sub-optimality. In experimental results, WO-EECBS does not produce much consistent speed-up. 
Improving the lower bound increases utilization of prioritized conflicts and symmetry reasoning, but actually hurts runtime performance.

\subsubsection{Weighted Focal Variant (WF-EECBS):}
We keep OPEN unweighted and instead incorporate the weighted heuristic in FOCAL along with the inadmissible conflict heuristic via $g+w_h*(h+r*c)$. We see that performance is driven by $r$ which determines the relative importance of finding a solution fast (lower $r$) vs avoiding conflicts (higher $r$), allowing us to explicitly reason between the two. We show that our use of $r$ changes the low and high level work, and that WF-EECBS $w_{so}=2$ with our roughly optimal $r=5$ results in 5-150x less low level work per CT node and 2-20x more CT nodes than EECBS, resulting in a net reduction of 5-100x less work in total. \par
Overall WF-EECBS helps on 7 out of 8 maps, providing large speed ups (10+) on three and massive speed ups (50-100+) on two compared to EECBS. Weighted EECBS (W-EECBS) therefore refers to this weighted focal version.

\subsubsection{Relating CBS, Prioritized Planning, and W-EECBS:}
We prove that PP is actually equivalent to generating the initial agent paths in the root CT node in bounded sub-optimal CBS planners like (ECBS, EECBS) with an infinite sub-optimality and prove that W-EECBS is the naturalize generalization of weighted PP and EECBS. We shorthand our W-EECBS method with $r \gets \infty$ as CBSPP, and experimentally verify how the number of generated nodes stays at 1 for low levels of agents until conflicts become unavoidable. We demonstrate how CBSPP's ability to replan using CBS's conflict resolution increases success rate compared to prioritized planning. At the expense of additional engineering complexity, practitioners using PP should try using W-EECBS with a large sub-optimality as they get the same prioritized planning behavior in the root node along with the natural robustness and completeness of CBS.

\section{Justifying removing timeouts from plots}
\label{section:appendix-justify-timeouts}
In all figures, if a method fails (times out on all 5 seeds) on a particular number of agents on a map, we do not report larger number of agents as this causes misleading visuals. A reminder that the speed up $S_{method} = T_{baseline}/T_{method}$ (larger is better) is reported to normalize differences in hardware, where the baseline is the unweighted method (ECBS or EECBS based on context). Figure \ref{fig:figure-explanation} demonstrates an example where including timeouts causes different methods to appear to have the same result, as well as causes false trends on their behaviour compared to the baseline.

\begin{figure}[h]
    \centering
    \includegraphics[width=0.45\textwidth]{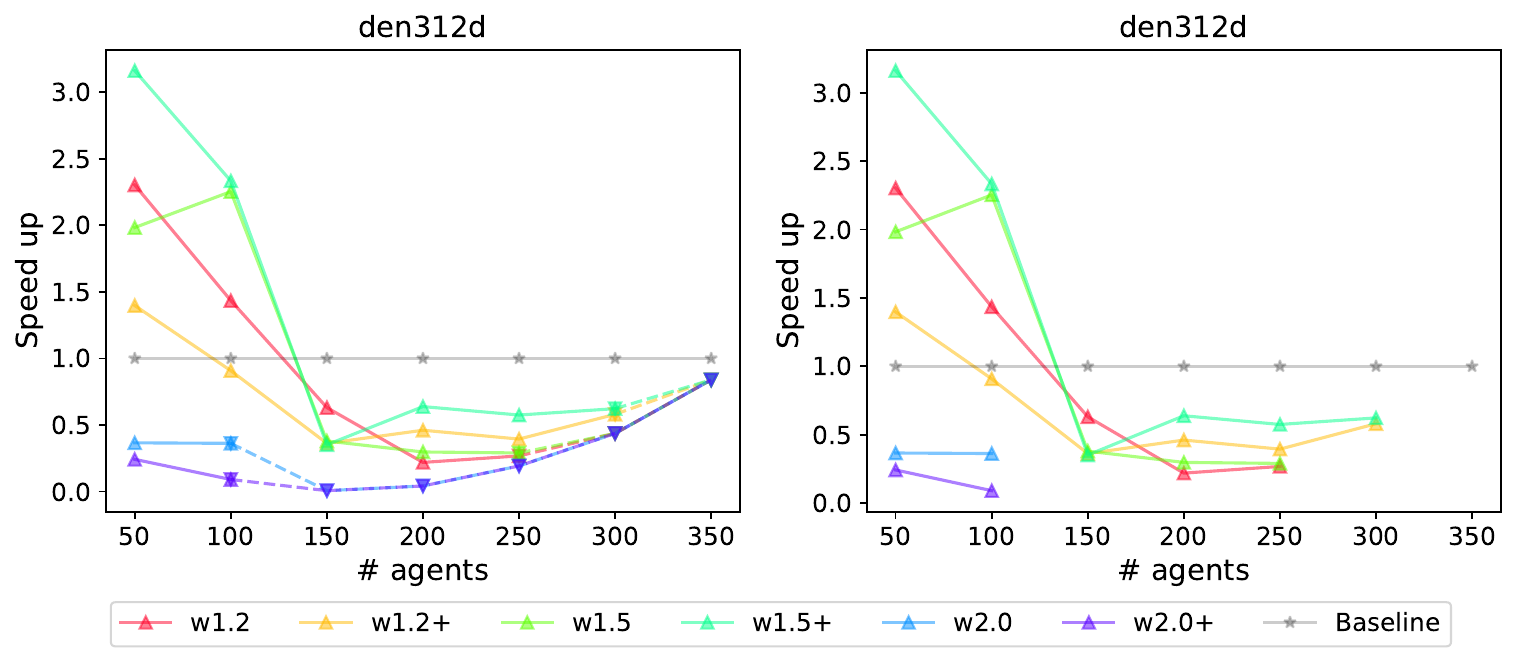}
    \caption{\footnotesize \textbf{Justifying removing timeouts |} The left image shows the raw speed up including instances which have timed out (in downward triangle marker and dashed lines) and instances which finished within the timeout $T_{max}$ (in upward triangle marker, solid lines, and starting at the leftmost of each plot). We see that the timed out instances all have the same values at the bottom as the speed up $S_{method}=T_{baseline}/T_{method}=T_{baseline}/T_{max}$. This causes the false impression that different failed methods have the same speed up, and that the speed up increases as the number of agents increases (which is actually caused by $T_{baseline}$ increasing). The right subplot without these failed instances displays the results much more accurately.}
    \label{fig:figure-explanation}
    \vspace*{-2mm}
\end{figure}

\section{Additional Plots}
We provide additional plots that showcase how performance changes over different maps.
\begin{figure*}[p]
    \centering
    \includegraphics[width=1\textwidth]{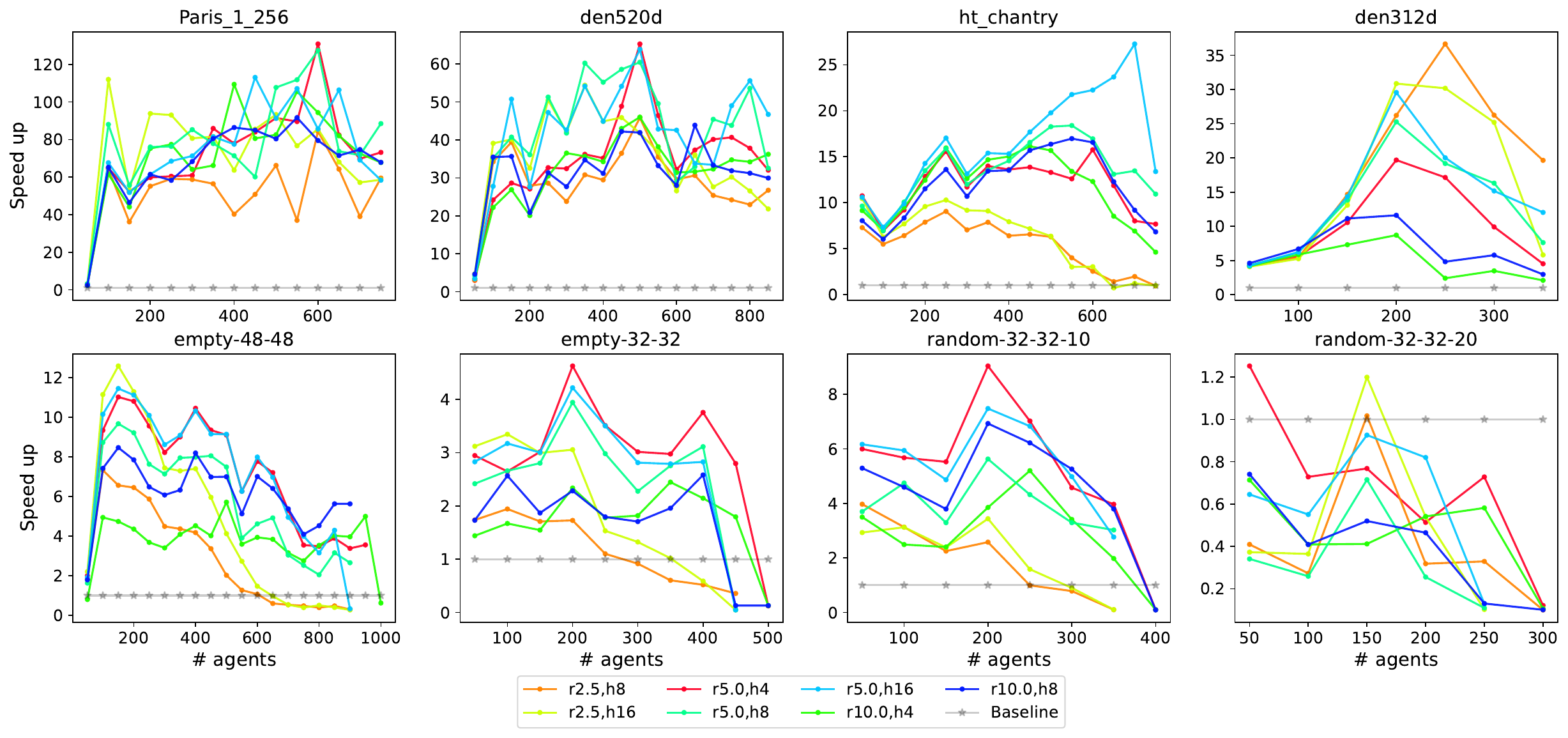}
    \caption{\textbf{WF-EECBS Results |} WF-EECBS produces a speed-up factor of 10 or higher on half the maps, and a smaller speed up on three, while performing worse than the baseline on just the random-32-32-20 map. Note the changes in y-axis.}
    \label{fig:weightedFocal-regular-full}
\end{figure*}

\begin{figure*}[p]
    \centering
    \includegraphics[width=1\textwidth]{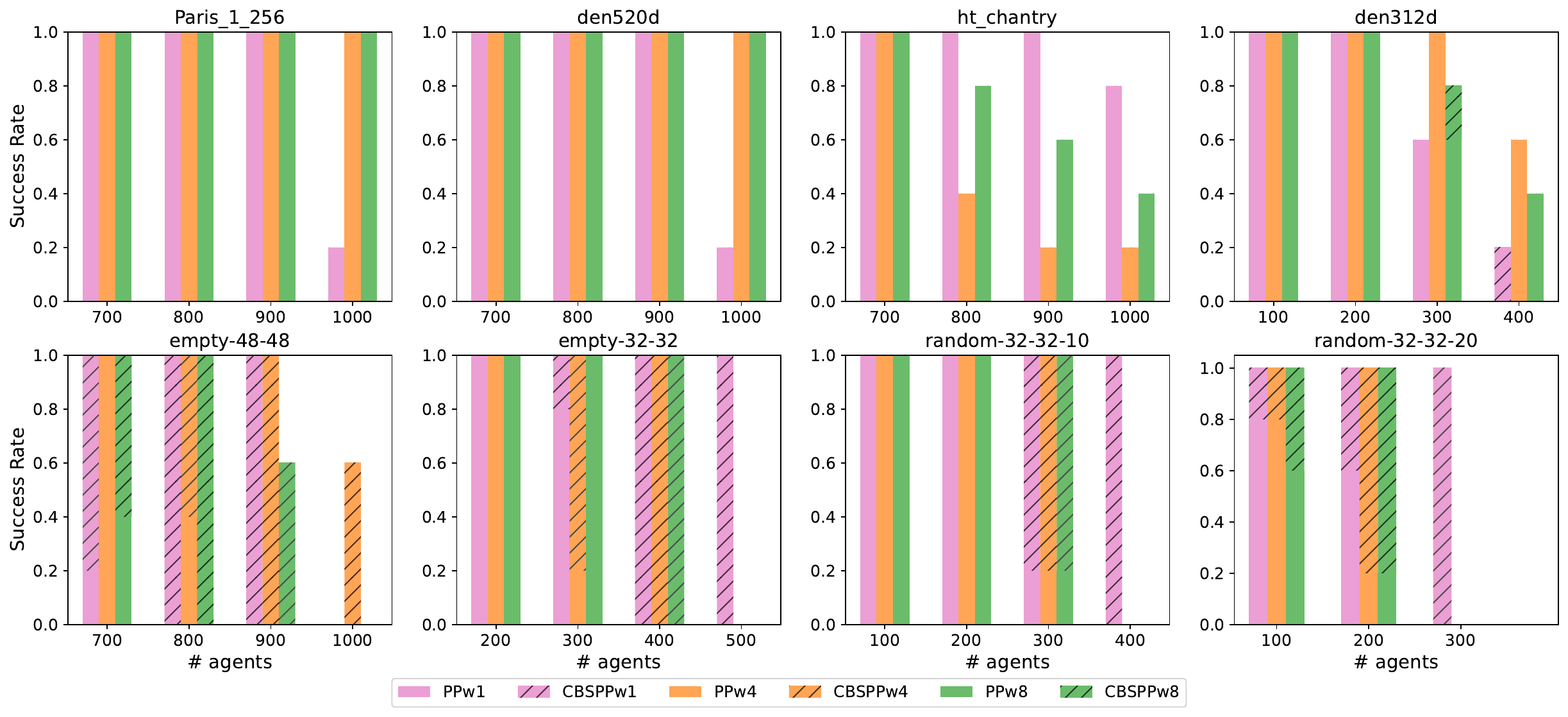}
    \caption{\textbf{PP vs CBSPP Success rate |} The increased success rate of the shaded region (CBSPP) over solid (PP) across different weights show the benefit of using EECBS's conflict resolution in high agent regimes or small maps where conflicts become unavoidable. Again observe how the larger maps (top row) are able to be solved with PP (i.e. no conflict resolution required), but smaller maps require reasoning over conflicts.}
    \label{fig:cbspp-success-full}
\end{figure*}

\section{Update March 2024: Speed-ups Dependent on Conflict Heuristic}
We discovered that regular EECBS's performance depends heavily on how the conflict heuristic is computed in the low-level search. In particular, the standard low-level search computes conflicts in respect to vertex and edge collisions with other agents' paths. Our codebase had an additional optimization which incorporated ``target" conflicts where an agent rests on its goal and a later agent crosses over it. This optimization, although resulting in a more accurate estimate of conflicts, hurts performance.

Concretely, given that an agent $a_i$ reached its goal state $s_g$ at timestep $t_1$, we look at all other agents $a_j$ and checked if they traversed $s_g$ at a timestep $t_2 > t_1$. If so we added this to the conflict heuristic. It ends up that including this significantly hurts the baseline performance. In the low level search, when the agent generates the goal node, the goal node could incur this penalty to have a high conflict penalty, and thus does not get expanded from the FOCAL list for a long until it finds a path to the goal with $t' > t_2$ or exhausts all $w_{so}$ suboptimal paths with lower conflicts. Without this computation, the agent generates and expands the goal node to quickly find a path that incurs a goal conflict, but this can be more efficiently resolved via constraints.

When removing this optimization, the baseline performed significantly better, and the relative improvement of our method substantially decreased. In particular, the current results show how W-EECBS helps in large maps Paris\_1\_256 and den520d. This is because the ``target" conflict computation causes low-level agents in EECBS to search over a large space before it can expand the goal node with the accurate-but-larger conflict value. With the lower inaccurate conflict estimate, the EECBS low-level search can immediately expand the goal node without additional search effort. 

We thus reran the main experiments to see how performance varies across different weights $r,w_h$ and $w_{so}$ suboptimalities. All experiments tested on the same 8 maps with the same settings except with a 60 second timeout and agent stepsize of 100 (instead of 50).
Compare Figure \ref{fig:updated-weights} with Figure \ref{fig:weightedFocal-ratio-full}, Figure \ref{fig:updated-suboptimality} with Figure \ref{fig:weightedFocal-regular-full}, Table \ref{tab:updated-suboptimality} with Table \ref{tab:suboptimality-joined-summary}, and Table \ref{tab:updated-understanding} with Table \ref{tab:understanding}. Note the larger agent stepsize means that there were roughyl 1/2 as many possible problems in the last column of Table \ref{tab:updated-suboptimality} than in Table \ref{tab:suboptimality-joined-summary}.

Figure \ref{fig:updated-weights} shows how overall performance changes substantially as our previous two strongest maps (Paris\_1\_256 and den520d) now only have marginal performance benefits. We see that performance trends as the number of agents increases differs between maps. Figure \ref{fig:updated-suboptimality} shows that the $w_{so}$ suboptimality also substantially effects performance similar to before. Note that the median speedup looks lown in Table \ref{tab:updated-suboptimality} as we compute it across solved instances in all maps and the large maps (with smaller speed-ups) have more such instances (as they allow more agents). Practitioners should therefore experiment with $r$ to see what value works best for their problem instances. Apart from the large maps, the original findings are followed except scaled down. Table \ref{tab:updated-understanding} shows a similar pattern as discussed in Table \ref{tab:understanding}, highlighting that the $r$'s role is robust to different calculations of the conflict heuristic.


\newpage

\begin{table}[h]
\centering
\resizebox{0.45\textwidth}{!}{
\begin{tabular}{|c|c||c|c|c|c|}
\hline
\multicolumn{2}{|c||}{Method} & \multicolumn{2}{c|}{Speedup} & \multicolumn{1}{c|}{\multirow{2}{*}{\parbox{0.22\linewidth}{\centering $\%$ faster than Baseline}}} & \multicolumn{1}{c|}{\multirow{2}{*}{\parbox{0.2\linewidth}{\centering \# solved vs Baseline}}} \\ \cline{1-4}
-CBS & \multicolumn{1}{c||}{$w_{so}$} & \multicolumn{1}{c|}{Max} & \multicolumn{1}{c|}{Median} &  &  \\ \hhline{|=|=#=|=|=|=|} 
EE & 1.01 & 0.97 & 0.64 & 14\% & 16/19 \\ \hline
EE & 1.1 & 1.61 & 1.05 & 61\% & 34/36 \\ \hline
EE & 1.2 & 2.78 & 1.10 & 66\% & 37/41 \\ \hline
EE & 1.5 & 7.45 & 1.30 & 77\% & 45/47 \\ \hline
EE & 2 & 16.09 & 1.46 & 96\% & 52/53 \\ \hline
EE & 4 & 26.93 & 1.44 & 86\% & 57/54 \\ \hline
EE & 8 & 26.28 & 1.50 & 94\% & 56/52 \\ \hline
\end{tabular}}
\caption{Analogous to Table \ref{tab:suboptimality-joined-summary}, we use $r=5, h=8$ and a timeout of 60 seconds with the updated conflict heuristic computation. The last column shows the number of instances solved (numerator) vs the baseline (denominator). We see that incorporating the weighted FOCAL hurts at a very low suboptimality $w_{so}=1.01$, but then produces benefits for $w_{so} \geq 1.5$. Although the median speed-up seems very low, Figure \ref{fig:updated-suboptimality} shows that this is very map and agent dependent, with W-EECBS providing benefits for $>$2x speed-ups for more than half the maps (these maps just have fewer runs than the larger maps which have more possible agents).}
\label{tab:updated-suboptimality}
\end{table}

\begin{table}[h]
\centering
\resizebox{0.49\textwidth}{!}{
\begin{tabular}{|c|c||c|c|c|c|} \hline
\multicolumn{2}{|c|}{Method} & \multirow{2}{*}{\parbox{0.22\linewidth}{\centering Total \# Low Level Nodes}} & \multirow{2}{*}{\parbox{0.22\linewidth}{\centering \# CT Nodes + bypasses}} & \multirow{2}{*}{\parbox{0.22\linewidth}{\centering \# LL per low level call}} & \multirow{2}{*}{Speed up} \\ \cline{1-2}
$r$ & $w_h$ &  &  &  &  \\ \hhline{|=|=#=|=|=|=|}
$\infty$ & $-$ & 3,583,976 & 56.1 & 13,487 & 1 \\ \hline
16 & 4 & 1,090,008 & 62.6 & 3,876 & 4.89 \\
8 & 4 & 379,080 & 64.7 & 1,392 & 9.32 \\
4 & 4 & 184,035 & 124 & 555 & 15.07 \\
2 & 4 & 272,493 & 712 & 301 & 3.15 \\ \hline
\end{tabular}}
\caption{{\footnotesize \textbf{Comparing low and high level statistics |} WF-EECBS expands less total low level nodes by generating more CT nodes significantly faster (fewer low level nodes per low level call) than EECBS (top row). Although we show one example ($w_{so}=2$, den312d with 200 agents), we observe the relative conflict weight $r$ controls this balance throughout different $w_{so}$, $w_h$, and scenarios.}}
\label{tab:updated-understanding}
\vspace*{-1em}
\end{table}


\begin{figure*}[t]
    \centering
    \includegraphics[width=1\textwidth]{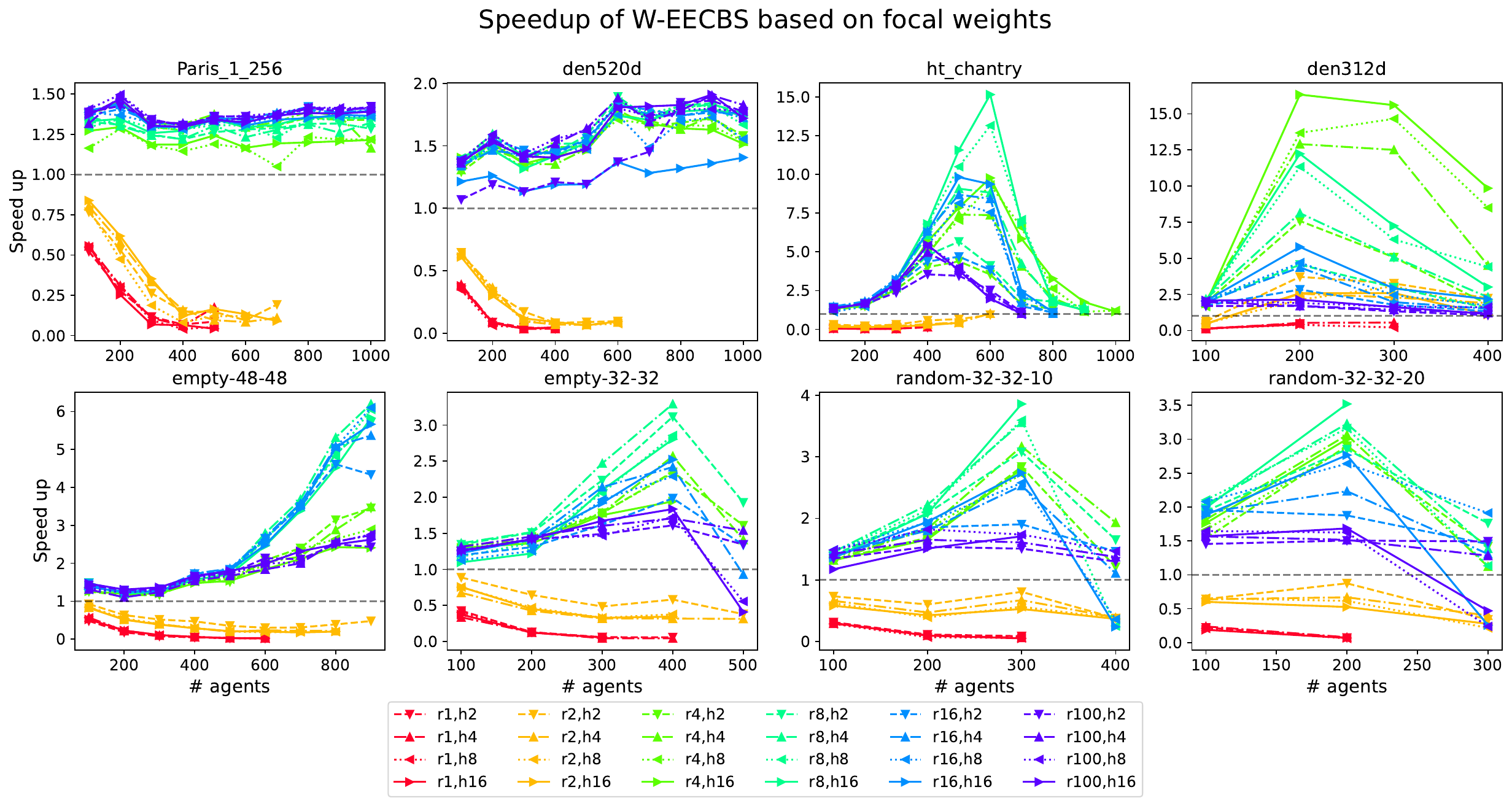}
    \caption{\textbf{W-EECBS Results with $w_{so}=2$ |} We notice that, like before, performance is dependent on $r$. However the magnitudes are substantially different compared to Figure \ref{fig:weightedFocal-ratio-full} or \ref{fig:weightedFocal-regular-full} for the two largest maps (note the differences in y-axis).}
    \label{fig:updated-weights}
\end{figure*}

\begin{figure*}[t]
    \centering
    \includegraphics[width=1\textwidth]{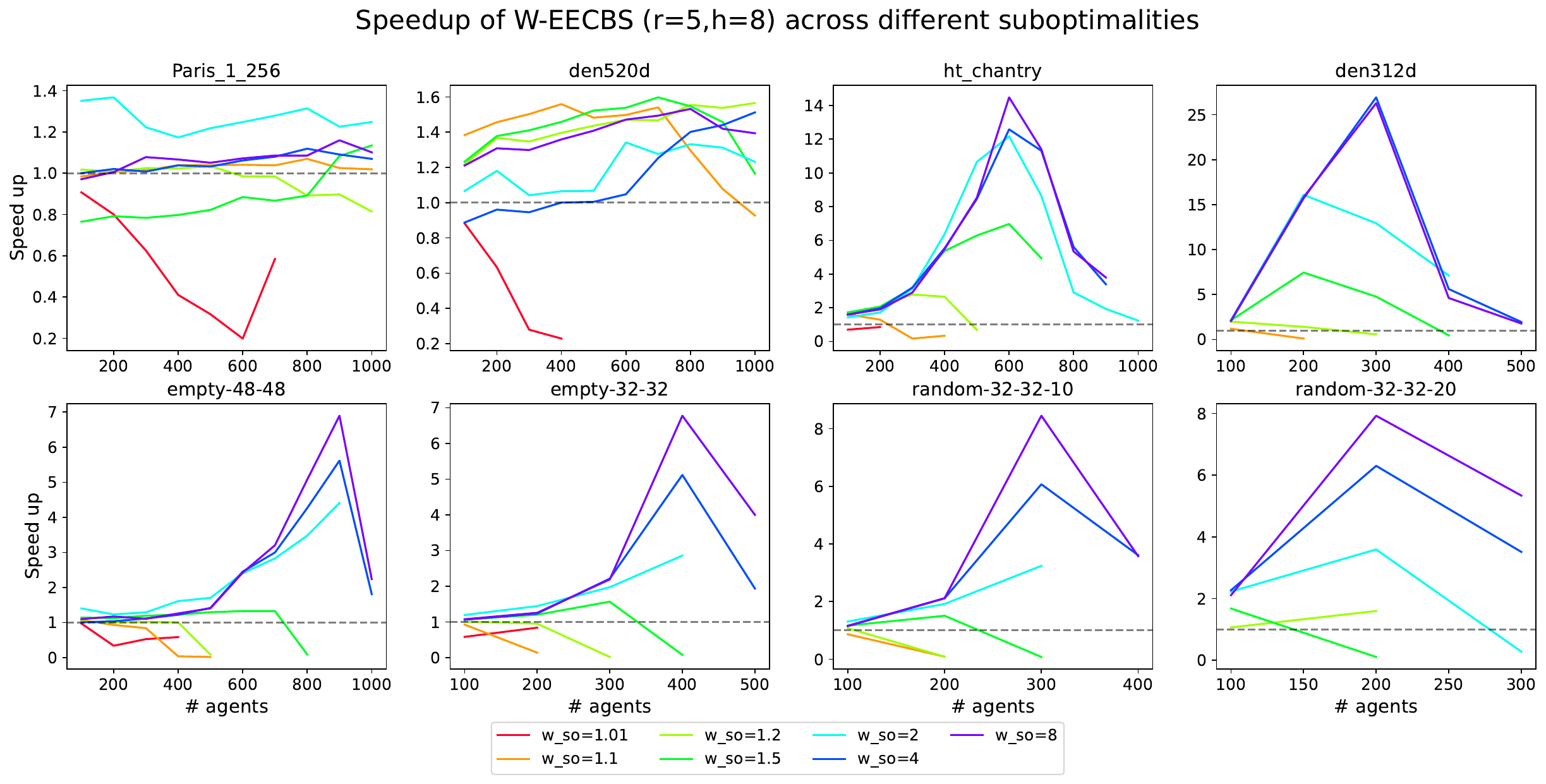}
    \caption{\textbf{W-EECBS Results with different suboptimality $w_{so}$ |} Similar to before, W-EECBS's relative performance to EECBS is dependent on the $w_{so}$. As $w_{so}$ increases, W-EECBS provides larger speedups.}
    \label{fig:updated-suboptimality}
\end{figure*}

\end{document}